\renewcommand\footnotetextcopyrightpermission[1]{} 
\newtheorem{definition}{Definition}
\newtheorem{thm}{Theorem}
\newtheorem{lem}[thm]{Lemma}
\DeclareMathOperator*{\argmax}{arg\,max}
\begin{document}
\title{Big Data Classification Using Augmented Decision Trees}

\author{Rajiv Sambasivan}

\affiliation{%
  \institution{Chennai Mathematical Institute}
  \streetaddress{H1, SIPCOT IT Park, Siruseri}
  \city{Kelambakkam} 
  \state{India} 
  \postcode{603103}
}
\email{rsambasivan@cmi.ac.in}

\author{Sourish Das}

\affiliation{%
  \institution{Chennai Mathematical Institute}
  \streetaddress{H1, SIPCOT IT Park, Siruseri}
  \city{Kelambakkam} 
  \state{India} 
  \postcode{603103}
}
\email{sourish@cmi.ac.in}


\begin{abstract}
\noindent\emph{ We present an algorithm for classification tasks on big data. Experiments conducted as part of this study indicate that the algorithm can be as accurate as ensemble methods such as random forests or gradient boosted trees. Unlike ensemble methods, the models produced by the algorithm can be easily interpreted. The algorithm is based on a divide and conquer strategy and consists of two steps. The first step consists of using a decision tree to segment the large dataset. By construction, decision trees attempt to create homogeneous class distributions in their leaf nodes. However, non-homogeneous leaf nodes are usually produced. The second step of the algorithm consists of using a suitable classifier to determine the class labels for the non-homogeneous leaf nodes. The decision tree segment provides a coarse segment profile while the leaf level classifier can provide information about the attributes that affect the label within a segment.}
\end{abstract}

%
%
%
%
%

\maketitle

\section{Introduction and Motivation}
Classification tasks are arguably the most common applications of machine learning. Over the years, several sophisticated techniques have been developed for classification. The size of the data set has started becoming an important consideration today in picking a method for classification. Solving the problem for linearly separable decision boundaries was an important first step \cite{zhang2004solving}. Linear decision boundaries may offer an adequate solution for some datasets but many real world classification problems are characterized by non-linear decision boundaries. Kernel methods \cite{bosern} are useful in these situations. However applying kernel methods to large datasets also has many challenges. On moderate size datasets, evaluating multiple kernels on the data and then subsequently picking hyper-parameters using a technique like grid search is a tractable approach. However with large datasets, this approach may be impractical because each experimental evaluation may be computationally expensive. Sometimes, such an iterative approach to kernel selection may not yield kernels that perform well and we may need to resort to multiple kernel learning \cite{bach2004multiple} to arrive at a suitable kernel for the problem. Since developing a single complex model for the entire dataset is a difficult task, a natural line of inquiry would be a divide and conquer strategy. This would entail developing models on segments of the data. Though ideas such as Hierarchical Generalized Linear Models \cite{lee1996hierarchical} have been developed, the method to determine the segments is a critical aspect of such an approach. Recently we reported a method to perform big data regression using a Classification and Regression Tree (CART) \cite{breiman1984classification} to perform this segmentation \cite{sambasivan2017big} . The effectiveness of this approach with regression problems suggested that this technique could be applied to classification tasks as well.\\
Experiments reported in this study suggest that this approach could be effective for classification tasks as well. The approach is characterized by two steps. The first step uses a CART decision tree to segment the large dataset. The leaves of the decision tree represent the segments. Decision trees minimize an impurity measure like the misclassification error, gini-index \cite{gini1971variability} or the cross-entropy at the leaves. While some leaves may be almost homogeneous with respect to the class distribution, in a large dataset a decision tree that generalizes well may have many leaves where the class distribution is not homogeneous. These nodes may  require a classifier that can determine complex decision boundaries or these leaves may represent noisy regions of the data . Accordingly, the second step of the algorithm fits a classifier to those nodes where the class distribution is non-homogeneous. In the experiments reported in this study we found that it was possible to increase classification accuracies in some cases. When this strategy fails we observed that this was because all classifiers perform poorly at certain leaf nodes. This suggests that these nodes are either noisy or may require additional features to achieve good classification performance. In this study, we observed this behavior with the census income dataset (see section \ref{sec:datasets} for details of the dataset). The classification task for this dataset is to predict the income level for an individual given socio-economic features. In segments of poor performance we found records of individuals working a high number of hours per week with the state government jobs, but reporting a low income. These records seem to be of dubious quality, since even with minimum-wage, these instances should belong to the higher income category. When these noisy segments were removed, we are able to enhance accuracy. Therefore this algorithm either achieves good accuracies or it helps us identify potentially noisy or difficult regions of our dataset. An attractive feature of this algorithm is the ease with which the resulting models can be interpreted. For any data instance, the decision tree model yields the aggregate properties associated with that instance. The leaf level model obtained from the second step can then be interpreted to yield insights into factors that affect the decision for that leaf. In the experiments conducted as part of this study we found that the accuracy of the proposed approach matches what is obtained with ensemble methods like gradient boosted trees \cite{chen2016xgboost} or random forests \cite{breiman2001random}. Models produced by ensemble methods are difficult to interpret in contrast to the models produced by the proposed method. Therefore the proposed method can produce models that are both interpretable and accurate. This is highly desirable.\\

\section{Problem Context}\label{sec:pc}
\noindent We are given a dataset $\mathcal{D}$, let $x_i$ represent the predictor variables and $y_i$ represent the label associated with the instance $i$. Observations are ordered pairs $(x_i, y_i),\ i = 1,2,\hdots, \mathit{N}$. Class labels $y_i$ are represented by $\{0,1,\hdots, K-1\}$. Classification trees partition the predictor space into $m$ regions, $\mathit{R_1},\mathit{R_2}, \hdots, \mathit{R_m}$. Consider a $\mathit{K}$ class classification problem. For a leaf node $m$, representing region $\mathit{R_m}$ with $\mathit{N_m}$ observations, the proportion of observations belonging to class $\mathit{k}$ is defined as:
\begin{equation*}
\mathit{\hat{p}_{mk}} = \frac{1}{\mathit{N_m}} \sum_{x_i \in \mathit{R_m}} \mathit{I}(y_i = k),
\end{equation*} 
$\mathit{I}(y_i = k) =  \begin{cases*} 1\quad if\ y_i = k\\ 0,\ otherwise. \end{cases*} $

\noindent CART labels the instances in leaf node $m$ with label $$k(m) = \underset{ k \in \mathit{K}} \argmax \quad \mathit{\hat{p}_{mk}}, $$ see \cite{friedman2001elements}[Chapter 9, section 9.2]. During tree development, CART tries to produce leaf nodes that are as homogeneous as possible. Typically not all leaf nodes are homogeneous. Leaves that are non-homogeneous with respect to the class distribution are data regions where we can enhance the performance of the decision tree.  Section \ref{sec:dt_for_seg} provides the details of the how this is achieved.
\section{Decision Trees For Segmentation}\label{sec:dt_for_seg}

\begin{algorithm}[ht]
\KwData{Dataset $\mathcal{D}$, leaf size $l_s$ and Test Dataset $\mathcal{T}$}
\KwResult{Segmented Decision Tree Model}
\SetKwData{SM}{seg.model}
\SetKwData{SRM}{seg.cl.model}
\SetKwData{TS}{test.seg}
\SetKwData{TRM}{test.cl.model}
\SetKwData{TRP}{pred.value}
\SetKwFunction{BSM}{Fit.Seg.Model}
\SetKwFunction{BSRM}{Fit.Seg.Classifier.Models}
\SetKwFunction{GSR}{Seg.Classifier.Model}
\SetKwFunction{PTR}{Predict}

\tcc{Fit Segmentation Model}
\SM$\leftarrow$ \BSM{$\mathcal{D}, l_s$}\;
\For{Each Segment $s$ in $\mathcal{D}$}{
\tcc{Fit Segment Classifier Model - can be a sophisticated model because segment size is small}
\tcc{A pool of classifiers is developed and the best leaf classifier for the segment on the training data is noted}
\SRM$\leftarrow$ \BSRM{$s$}\;
}
\tcc{Score Test Set}
\For{Each record $r$ in $\mathcal{T}$}{
\tcc{Get Segment for Test Record from Decision Tree}
\TS$\leftarrow$ \SM{$r$}\;
\tcc{Get best classifier model for \TS}
\TRM$\leftarrow$ \GSR{\TS}\;
\tcc{Obtain prediction for $r$ from \TRM}
\TRP$\leftarrow$ \PTR{\TRM, $r$}\;
}

\caption{Big Data Classification Using Decision Tree Segmentation}\label{algo:dt_tree_seg_reg}
\end{algorithm}

\noindent The first step of the algorithm is to segment the dataset with a CART decision tree. The second step of the algorithm is to augment the performance of the decision tree classifier in segments or leaves where the class distribution is non-homogeneous. This is achieved by using a suitable leaf level classifier. A pool of classifiers is developed for these segments and the best performing classifier, as indicated by the cross-validated training error is used as the leaf classifier for the segment. Algorithm \ref{algo:dt_tree_seg_reg} summarizes these ideas. The number of instances at the leaf  or equivalently the height of the decision tree is an important parameter. The following factors need to considered in picking this parameter:
\begin{enumerate}
\item Generalization error of the decision tree: We need to avoid over-fitting the decision tree model. The decision rules produced by the tree should be valid for the test set and produce a test error that is not very different from the training error.
\item Total generalization error of the algorithm: We want our algorithm to be as accurate as possible. The leaf size at which the best decision tree error is obtained may be different from the leaf size at which the lowest overall error is obtained for the algorithm. We need to ensure that the composite model generalizes well. 
\end{enumerate} 
These ideas are discussed and illustrated in section \ref{sec:experiments}.

\section{Leaf Classifiers}\label{sec:leaf_classifiers}
\noindent The key idea with the algorithm presented in this work is to augment the performance of decision tree nodes where the class distribution is non-homogeneous. Using a suitable classifier, we may be able to determine decision boundaries in these segments that result in better classification accuracy than what is produced with the plain decision tree. This strategy works very well for some datasets. Sometimes however we do encounter nodes where all classifiers perform poorly. This typically happens for a small proportion of the segments. These segments are probably noisy or require additional features for achieving good classification performance. Strategies to deal with these segments are discussed in section \ref{sec:dor}. \cite{kohavi1996scaling} presents an algorithm called NBTree that is similar to the idea presented in this work. \cite{kohavi1996scaling} work uses a Naive Bayes classifier for the leaf nodes.  The tree algorithm used in \cite{kohavi1996scaling} is C4.5 \cite{quinlan2014c4}. In this work we used the CART \cite{breiman2001random} algorithm for the decision tree. The accuracies obtained with a decision tree based on the CART algorithm is higher than what is reported with NBTree in \cite{kohavi1996scaling}, see section \ref{sec:accuracy} for details. In  \cite{kohavi1996scaling}, the Naive Bayes classifier is the leaf classifier for every leaf whereas this algorithm permits flexibility with this decision. We can pick any classifier for the leaf node. Further, we can fit a pool of classifiers and then pick the best classifier for a node based on cross-validation scores observed during training. It is also important to note that leaf nodes may homogeneous with respect to the class distribution, in which case there is no need to fit a classifier. We can accept the decision tree results for that leaf. The implementation of Algorithm \ref{algo:dt_tree_seg_reg} that was used for this study implements these features.

\section{Feature Selection}\label{sec:eol}
\noindent The datasets used for this study were high dimensional. These public datasets have also been used research and in kaggle competitions \cite{kaggle}. Feature selection performed using the extremely randomized trees algorithm \cite{geurts2006extremely} helps in removing noisy features for some of the datasets used in this study. Extremely randomized trees is a tree based algorithm that is similar to random forests, but with some important differences. Unlike in random forests where the splits are determined on the basis of an impurity measure like gini \cite{gini1971variability} or cross-entropy, the splits in extremely randomized trees are randomly determined.

\section{Methodology}\label{sec:methodology}
\noindent The datasets used in this study have been featured in kaggle competitions. A review of the competition forums reveal that feature selection and ensemble tree based algorithms are the key components for good performance on these datasets. The methodology used to evaluate the effectiveness of the algorithm proposed in this study is as follows. Feature selection is performed to determine the relevant features for a dataset. If feature selection did not help improve accuracy, we retain all original features. We use a CART decision tree on the datasets used for this study and obtain a performance measurement. Next we apply the random forest and gradient boosted trees algorithm on the datasets and obtain a performance measurement. Finally we apply Algorithm \ref{algo:dt_tree_seg_reg} and obtain the performance measurement for the proposed algorithm. We can then evaluate how the accuracy obtained with the proposed algorithm compares with ensemble methods such as random forests or gradient boosted trees. We can also evaluate the accuracy gain obtained with Algorithm \ref{algo:dt_tree_seg_reg} over a plain CART decision tree.
It should be noted that the leaf size is an important parameter in applying Algorithm \ref{algo:dt_tree_seg_reg}. The leaf size used was one that produced good accuracy and good generalization. In general, the decision tree generalizes well at this leaf size. Experiments that illustrate the effect of the leaf size parameter are discussed in section \ref{sec:experiments}. 
\section{Experiments} \label{sec:experiments}
\subsection{Datasets} \label{sec:datasets}
\noindent The datasets for this study all came from the UCI data repository \cite{Lichman} and the US Department of Transportation website \cite{RITA_Delay_Data_Download}. These datasets also figure in the kaggle playground category competitions. Playground category competitions are organized for the purpose of testing out machine learning ideas \cite{no_free_hunch_2016}. There is no prize money involved. The following datasets were used for this study:
\begin{enumerate}
\item \textbf{Forest Cover Type}: This dataset is used to predict the type of forest cover given cartographic information. The data covers four wilderness areas in the Roosevelt National Forest of northern Colorado. The dataset consists of 581012 instances with 54 attributes. There are no missing values for attributes.
\item\textbf{ Airline Delay}: This dataset is used to predict airline travel delays. The dataset is obtained from the US Department of Transportation website \cite{RITA_Delay_Data_Download}. The data consists of flight on time arrival performance for the months of January and February of 2017. A flight is considered delayed if it is associated with an arrival delay of fifteen minutes or more. Thirteen flight information attributes are extracted. The dataset has over eight hundred thousand records.
\item \textbf{Census Income}: This dataset contains data extracted from the 1994 census database. The prediction task associated with this dataset is to predict if the annual income of an individual is over 50,000 dollars or not. The dataset has missing attributes. This dataset has been studied in a machine learning context in \cite{kohavi1996scaling}. As done in \cite{kohavi1996scaling}, we have ignored records with missing values. Unlike the previous datasets, feature selection did not improve accuracy with this dataset and all original attributes were retained for analysis. This dataset has 14 attributes and 45220 complete instances (rows with no missing values).
     
\end{enumerate}
.
\subsection{Experimental Evaluation of Leaf Size}
As discussed in section \ref{sec:dt_for_seg} and section \ref{sec:methodology}, the leaf size (or equivalently the tree height) is an important parameter for the algorithm presented in this work. The leaf size can affect:
\begin{enumerate}
\item The generalization of the decision tree used to segment the data.
\item The generalization of the overall model.
\end{enumerate}
Therefore we need two experiments. The first experiment illustrates the effect of the leaf size on the generalization of the decision tree model. The second experiment illustrates the effect of the leaf size on the overall model (Algorithm \ref{algo:dt_tree_seg_reg}). For both these experiments, 70 percent of the data was used for training and 30 percent of the data was used for the test set. For both these experiments, the classification accuracy was used as the metric for error. All modeling for this study was done in \texttt{Python} with the \texttt{scikit-learn}(\cite{scikit-learn}) library.

\section{Discussion of Experimental Results}\label{sec:dor}
\noindent The key idea with the proposed algorithm is that we can enhance decision tree classification performance in leaf nodes where the decision tree is unable to produce homogeneous class distributions. It is possible that other classification techniques like kernel methods or K-Nearest Neighbors may be able to identify good decision boundaries in these nodes. In the experiments reported in this study we found one of the following two types of behavior:
\begin{enumerate}
\item Leaf Classifiers can enhance classification performance: This was the case with the forest cover type identification dataset and the airline delay dataset. We could enhance classification accuracy at the leaf nodes by using another classifier.
\item Leaf Classifiers are unable to enhance classification performance: In this case all classifiers perform poorly in certain regions of the dataset. This kind of behavior was noted with the census income dataset.
\end{enumerate}

\begin{figure}[ht]

\includegraphics[width= \linewidth]{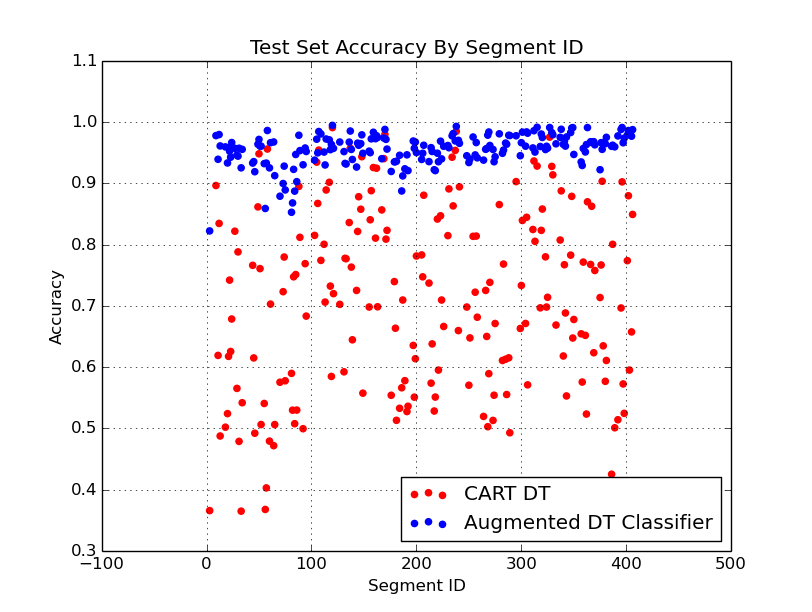}\par\caption{Segment Accuracy Enhancement - Forest Cover Type}
\label{fig:EXP_SAE_FC}
\end{figure}

\begin{figure}
\includegraphics[width=\linewidth]{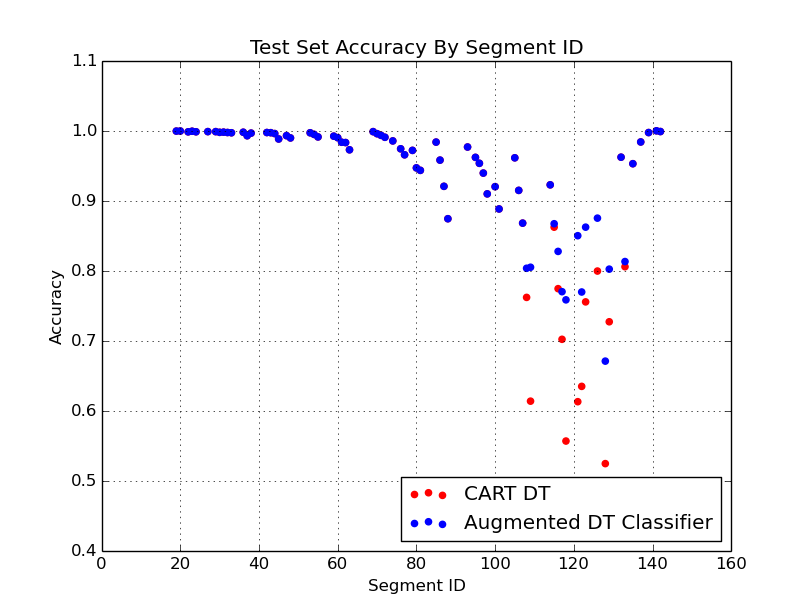}\par\caption{Segment Accuracy Enhancement - Airline Delay}
\label{fig:EXP_SAE_AD}
\end{figure}

\begin{figure}[ht]
\includegraphics[width= \linewidth]{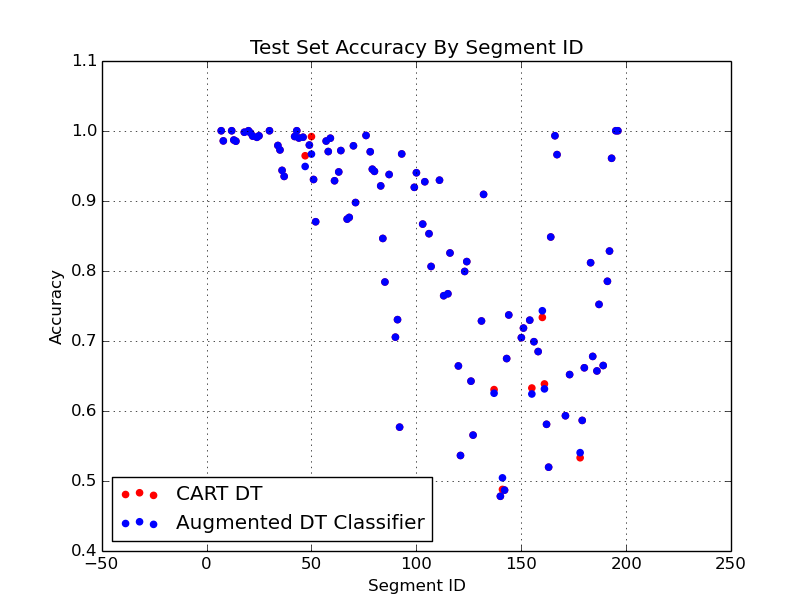}\par\caption{Segment Accuracy Non Enhancement- Census Income}
\label{fig:EXP_SAE_CI}
\end{figure}

An illustration of the effectiveness of leaf level classifiers is provided in Figure \ref{fig:EXP_SAE_FC} through Figure \ref{fig:EXP_SAE_CI}. These plots illustrate the increase in accuracy of test set prediction when using a leaf level classifier. The accuracies obtained with a plain decision tree are illustrated in red while the accuracies obtained when a leaf level classifier is used is illustrated in blue. The leaf size for these experiments are the values at which both good accuracy and good generalization are observed . With the forest cover identification dataset, the leaf level classifiers are able to enhance the accuracy in almost every segment of the dataset. The leaf level classifier that was effective in almost all leaf segments of the forest cover dataset was the KNN (K Nearest Neighbors) classifier with a window size of 3 neighbors. In this experiment we used a neighborhood size of 3 for all leaves. It is possible that this neighborhood size is non-optimal for some segments. So we could possibly enhance the accuracy reported in section \ref{sec:accuracy} by tuning this parameter in segments in the lower end of the accuracy range.\\
The response for the airline delay dataset indicates if a flight is going to delayed over 15 minutes. The response for this dataset is skewed. The proportion of delays for the test set is shown in Figure \ref{fig:SEG_DELAYED_PROP_AD}. As is evident, flight delays are fairly uncommon for most segments. However there are a small proportion of segments characterized by higher delays. The segment ID's for these higher delay segments range from 100 through 140. An analysis of Figure \ref{fig:EXP_SAE_AD} shows that leaf level classifiers help enhance accuracies in these segments. It appears that there are more blue points than red points in Figure \ref{fig:EXP_SAE_AD}. Many segments have very low proportion of delays. In these segments, there is really no advantage in using a leaf level classifier. The plain CART decision tree does well in regions where the response is fairly homogeneous (very low delays). The accuracies of the leaf level classifier and the plain decision tree overlap in these regions. The leaf level classifier accuracy is plotted second, so there is more blue evident. In the higher delay segments the leaf level classifiers perform better than the plain decision tree and therefore the difference is evident in such regions. For the higher delay segments there is no classifier that performs best for all the segments. For some segments a logistic regression classifier performed best, while for others an SVM classifier or a KNN classifier performed best. As with the forest cover dataset, there is scope for improvement of the accuracy reported in section \ref{sec:accuracy} by fine tuning the classifier hyper-parameters in the higher delay segments.\\

The census income dataset provides an example of where leaf level classifiers do not help in enhancing accuracy. An analysis of Figure \ref{fig:EXP_SAE_CI} shows that there are many segments in the segment ID range 100 through 200 where the accuracy of prediction is low. For example, there are many segments where the accuracy of prediction is less than 60\%. An analysis of the classifier accuracies in these regions revealed that all classifiers perform poorly in these problematic segments. The accuracy obtained with the plain decision tree is the same as the accuracy obtained with leaf level classifiers in these segments. This suggests that these regions are noisy or that we may require a better set of features for these regions. This is discussed in section \ref{sec:accuracy}. As with the airline delay dataset, there is a lot of overlap in the accuracies produced by the plain decision tree and the leaf level classifiers. However, the performance is characterized by two regions. A region where the decision tree and the augmented decision tree perform well and a region where the the decision tree and the augmented decision tree perform poorly. 

In summary, we are either able to enhance performance or we are able to identify problematic regions of our dataset when we use Algorithm \ref{algo:dt_tree_seg_reg}. Problematic regions are those where all classifiers perform poorly. This data could be isolated for further analysis. Removing these problematic regions enhances accuracy (see section\ref{sec:accuracy}).
 
\subsection{Effect of Leaf Size on Decision Tree Generalization}\label{sec:eff_ldct}
These experiments illustrate the effect of the leaf size parameter on the generalization error of the decision tree. Tree growth along a particular path in the tree is stopped when the number of instances in the node falls below a threshold level. The training error and the test set error associated with the leaf size setting is noted. This procedure is repeated for various values of the threshold level of the leaf size parameter. The results are shown in Figure \ref{fig:EXP_LSDT_AD} through Figure \ref{fig:EXP_LSDT_FC}.

\begin{figure}[ht]
\includegraphics[width=\linewidth]{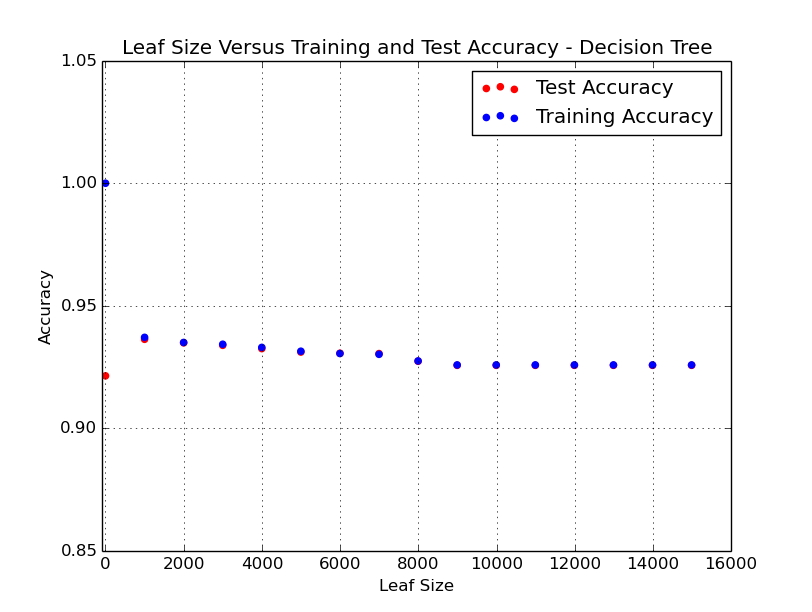}\par\caption{Airline Delay Decision Tree Generalization}
\label{fig:EXP_LSDT_AD}
\end{figure}

\begin{figure}[ht]
\includegraphics[width= \linewidth]{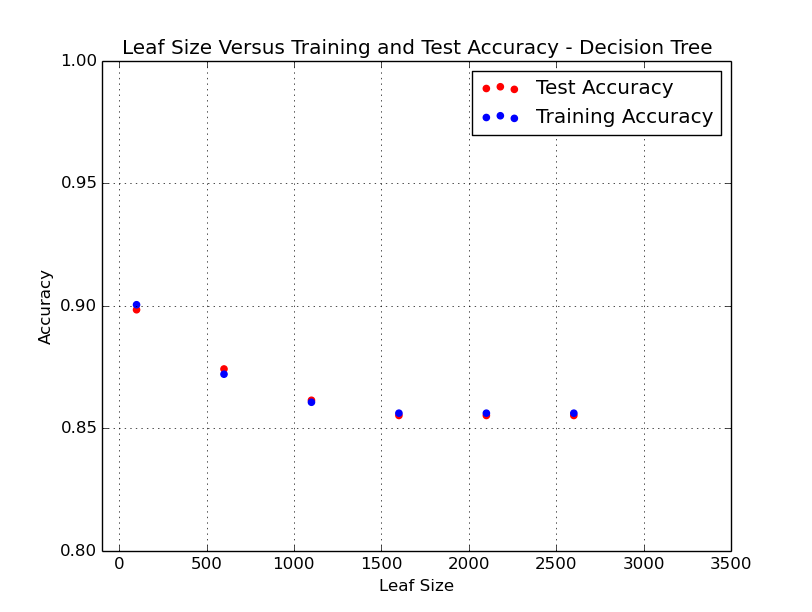}\par\caption{Census Income Decision Tree Generalization}
\label{fig:EXP_LSDT_CI}
\end{figure}

\begin{figure}
\includegraphics[width= \linewidth]{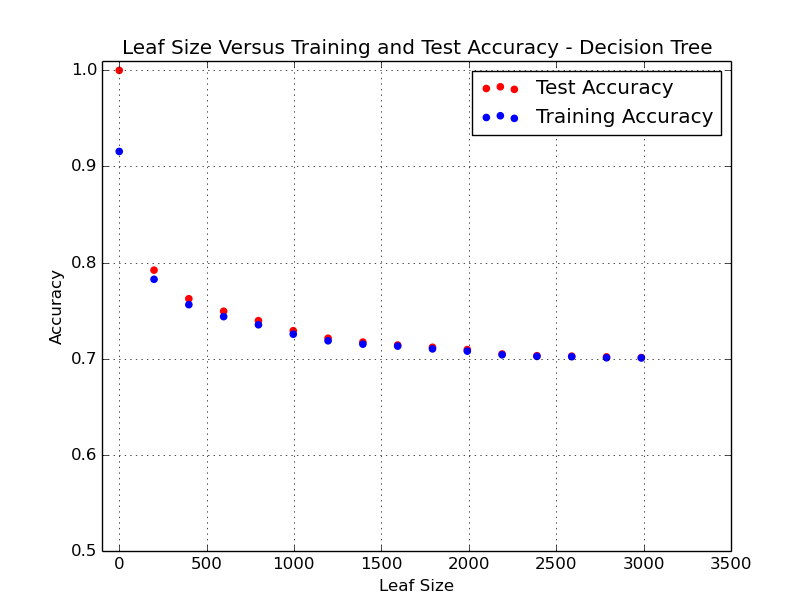}\par\caption{Forest Cover Type Decision Tree Generalization}
\label{fig:EXP_LSDT_FC}
\end{figure}
As is evident, the generalization error of the decision tree is good over the entire range of leaf sizes. The single exception is the case of using a leaf size of one for the forest cover and airline delay datasets. As expected, there is significant over-fitting for this case.

\subsection{Effect of Leaf Size on Model Generalization}
These experiments illustrate the effect of the leaf size on the error of Algorithm \ref{algo:dt_tree_seg_reg}. For each leaf size, the training and test error are noted. The results of these experiments are shown in Figure \ref{fig:EXP_LSMODEL_AD} through \ref{fig:EXP_LSMODEL_FC}.
\begin{figure}[ht]
\includegraphics[width=\linewidth]{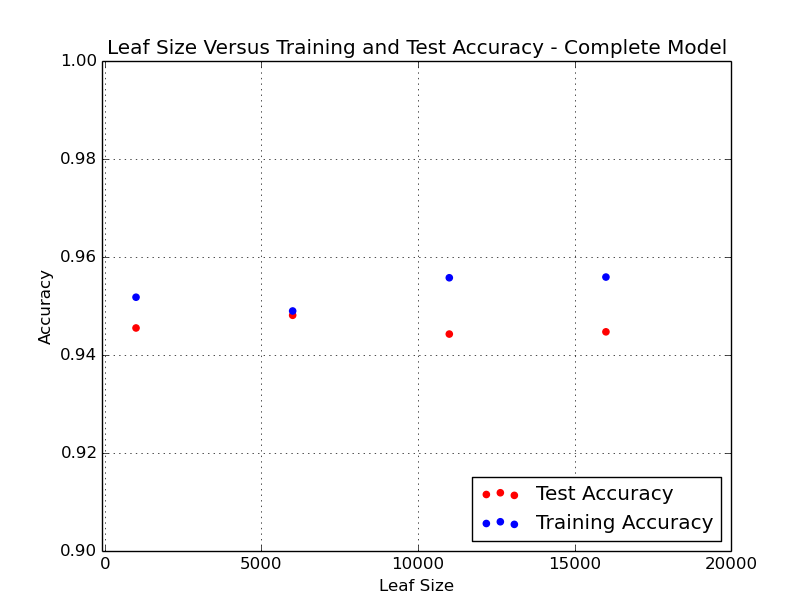}\par\caption{Airline Delay Overall Model Generalization}
\label{fig:EXP_LSMODEL_AD}
\end{figure}

\begin{figure}
\includegraphics[width= \linewidth]{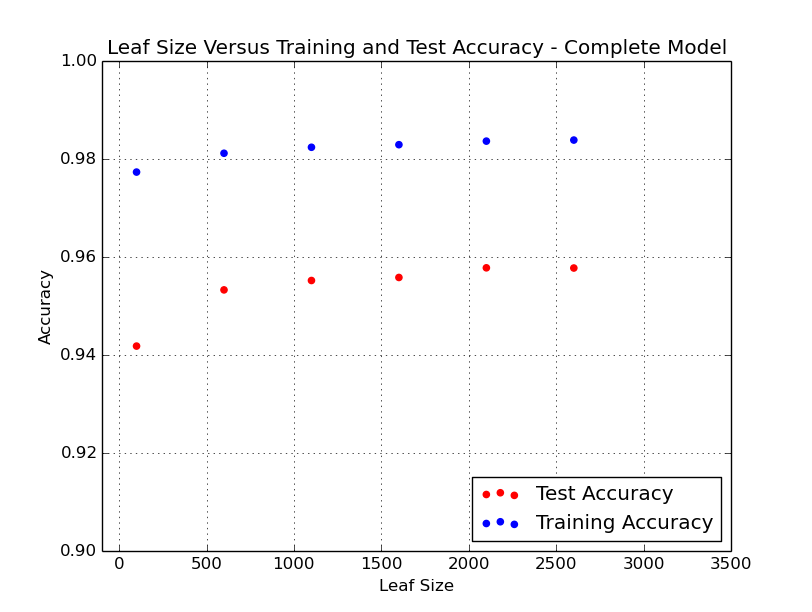}\par\caption{Forest Cover Type Overall Model Generalization}
\label{fig:EXP_LSMODEL_FC}
\end{figure}
The optimal leaf size is one where we achieve good accuracy and good generalization. For the airline delay dataset this optimal value is about 6000 (see Figure \ref{fig:EXP_LSMODEL_AD}). For the forest cover dataset, the optimal value is around 1500 (see Figure \ref{fig:EXP_LSMODEL_FC}). There is very little accuracy gain with increasing the leaf size beyond 1500 for the forest cover dataset. For both the forest cover and the airline delay dataset, the decision tree generalizes well at the optimal settings (see Figure \ref{fig:EXP_LSDT_AD} and Figure \ref{fig:EXP_LSDT_FC}).

\subsection{Accuracy}\label{sec:accuracy}
\noindent As discussed in section \ref{sec:methodology}, we can evaluate the effectiveness of the algorithm by comparing the accuracy obtained with Algorithm \ref{algo:dt_tree_seg_reg} with those obtained from ensemble methods like random forests or gradient boosted trees. We can also evaluate the improvement in accuracy over using a plain decision tree. The accuracies obtained with a plain decision tree are shown in Table \ref{tab:baseline_accuracy}. The leaf size used with the plain decision tree is one where the best accuracy was observed. The sizes of the datasets are provided in section \ref{sec:datasets}. For all experiments, 70\% of the dataset was used for training and 30\% was used as the test set.

%
\begin{table}[ht]
\centering
\begin{tabular}{rlrr}
  \hline
 & Dataset & Leaf.Size & Accuracy \\ 
  \hline
1 & Forest Cover &  1 & 0.916 \\ 
  2 & Airline &  1000 & 0.937 \\ 
  3 & Census Income &  100 & 0.853 \\ 
   \hline
\end{tabular}
\caption{Baseline Accuracies - CART Decision Tree Algorithm}\label{tab:baseline_accuracy}
\end{table}
Algorithm \ref{algo:dt_tree_seg_reg} can enhance the baseline accuracies reported in Table \ref{tab:baseline_accuracy} for the forest cover and the airline delay datasets. The improvement in accuracies are reported in Table \ref{tab:algorithm_accuracy}. Ensemble methods also achieve high accuracies for these datasets, however the models they produce are not interpretable. Algorithm \ref{algo:dt_tree_seg_reg} produces models that are very easily interpretable. 

\begin{table}[ht]
\centering
\scriptsize
\begin{tabular}{rllr}
  \hline
 & Dataset & Method & Accuracy \\ 
  \hline
1 & Airline Delay & XGBoost & 0.944 \\ 
  2 & Airline Delay & Random Forest & 0.946 \\ 
  3 & Airline Delay & \makecell{DT Segmented Classifiers, leaf size = 6000} & 0.945 \\ 
  4 & Forest Cover & XGBoost & 0.936 \\ 
  5 & Forest Cover & Random Forest & 0.945 \\
  6 & Forest Cover & \makecell{DT Segmented Classifiers, leaf size = 1500} & 0.957 \\  
   \hline
\end{tabular}
\caption{Accuracies obtained with Algorithm \ref{algo:dt_tree_seg_reg}}\label{tab:algorithm_accuracy}
\end{table}
\normalsize
With the census income dataset we observed that we have a small proportion (13.58\% of the data set instances) of decision tree nodes where all classifiers perform poorly. A preliminary analysis of these records indicates that there are possible data errors with this segment. For example there are records of people working with the state government working over 70 hours a week but reporting less than 50 thousand dollars in income. These records seem dubious since even at minimum wage, such employees should make over fifty thousand dollars. In any case these set of records may require further analysis to determine if they are either noisy or require additional features to obtain better classification performance.\\
When these records are removed from the dataset, we obtain an accuracy of 90.04\% (see Figure \ref{fig:EXP_LSDT_CI}). Algorithm \ref{algo:dt_tree_seg_reg} provides us a method to identify such problematic regions of our dataset. \cite{xiong2006enhancing} provides techniques to remove noise from datasets. Finding the noisy regions in large datasets and separating them from regions of good data quality is a time consuming task. Algorithm \ref{algo:dt_tree_seg_reg} can help identify these regions. Noise removal techniques, such as those discussed in \cite{xiong2006enhancing} can then be applied to see if these can help improve classification accuracy. Therefore, there is scope for improving the accuracy with the census income dataset as well.
\cite{kohavi1996scaling} report an accuracy of 84.47\% with the NBTree algorithm. Training and test sizes used in \cite{kohavi1996scaling} are similar to those used in this work. A review of Table \ref{tab:baseline_accuracy} shows that the baseline accuracy with a CART decision tree is 85.3\%. \cite{kohavi1996scaling} reports an accuracy of 81.91\% for a C 4.5 decision tree. This suggests that the choice of the decision tree (C 4.5 versus CART) can affect the accuracy.

\subsection{Interpreting the Model}\label{sec:interpretation}
\noindent Models produced by Algorithm \ref{algo:dt_tree_seg_reg} have a simple interpretation. A data instance can be associated with two models - the segment model and the leaf classification model. The segment model provides an aggregate profile for the data instance while the leaf classification model can yield insights into the factors that affect the label for an instance within the segment. Therefore we can interpret the model at coarse and fine granularities. A sample of the segment profiles for the forest cover dataset is shown in Table \ref{tab:sample_forest_cover_seg}. The columns provide the relative proportion of the different types of tree cover in that segment. It is clear that each segment is characterized by a particular set of tree cover. 

\begin{table}[ht]
\centering
\scriptsize
\begin{tabular}{rrrrrrrrr}
  \hline
 & Seg. ID & CT\_1 & CT\_2 & CT\_3 & CT\_4 & CT\_5 & CT\_6 & CT\_7 \\ 
  \hline
1 &    3 & 0.003 & 0.085 & 0.262 & 0.361 & 0.003 & 0.287 & 0.000 \\ 
  2 &   10 & 0.000 & 0.015 & 0.785 & 0.002 & 0.000 & 0.197 & 0.000 \\ 
  3 &   11 & 0.000 & 0.032 & 0.653 & 0.033 & 0.000 & 0.281 & 0.000 \\ 
  4 &   12 & 0.000 & 0.026 & 0.885 & 0.011 & 0.000 & 0.078 & 0.000 \\ 
  5 &   15 & 0.000 & 0.000 & 0.437 & 0.017 & 0.000 & 0.546 & 0.000 \\ 
   \hline
\end{tabular}
\caption{Sample of Segment Profiles - Forest Cover Dataset}\label{tab:sample_forest_cover_seg}
\end{table}
\normalsize  
Similarly, Figure \ref{fig:SEG_DELAYED_PROP_AD} shows the proportion of flights delayed by segment ID. It is evident that some segments are associated with a higher proportion of delays while many segments have very low proportion of delays. Most decision tree implementations provide a feature to generate decision rules or tree visualizations. The decision tree visualization for the airline delay dataset is shown in Figure \ref{fig:DT_VISUAL_AD}. The leaves are color coded to indicate the majority class for that node. The blue nodes indicate the nodes associated with delays. This provides an easy way to generate the coarse grained profile for a segment. We can then interpret the leaf level model for an instance, for example with the airline delay dataset, a logistic regression model, to determine the factors that affect flight delays for a particular segment. In summary the models provided by the algorithm reported in this work can be easily interpreted. This is in contrast to ensemble models like Random Forests or Gradient Boosted Trees. While these can provide accurate predictions, the models they produce are not interpretable.
\begin{figure}[ht]
\includegraphics[width= \linewidth]{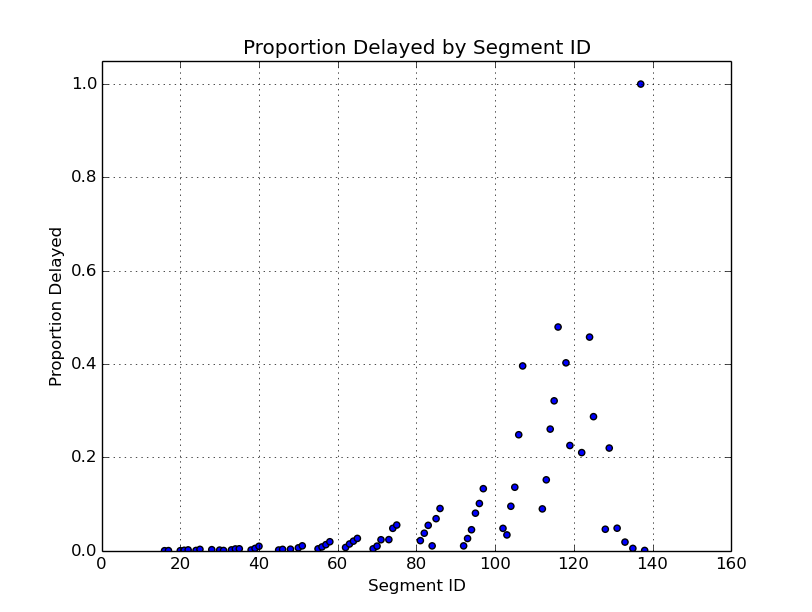}\par\caption{Airline Delay Segment Delayed Proportion}
\label{fig:SEG_DELAYED_PROP_AD}
\end{figure}

\begin{figure}[ht]
\includegraphics[scale = 0.5]{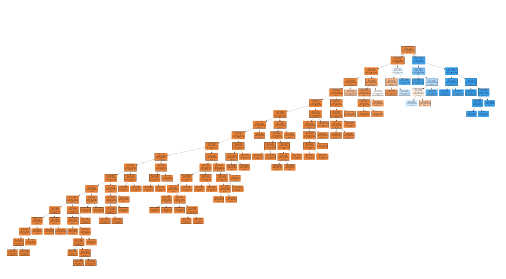}\par\caption{Airline Delay Decision Tree Visualization}
\label{fig:DT_VISUAL_AD}
\end{figure}

\section{Analysis of Segment Classifiers}

\subsection{Generalization of Segment Classifiers}

The effect of the leaf size on the generalization of the decision tree and Algorithm 1 was evaluated experimentally. A related concern is the generalization of a particular segment classifier. We can analyze the generalization of the classifier for a particular segment using concentration inequalities. The segment classifier is a function $f:x\rightarrow y$. Here $x\in \mathcal{X}=\mathbb{R}^d$ represents the predictor variables and $y\in \mathcal{Y}=\{0,1,...,k-1\}$ represents the label. The `0-1' loss function $l$ is defined as
$$
l(f(x),y)=\bigg\{\begin{array}{cc}
1 & \text{, if } f(x)\neq y,\\
0 & \text{, if } f(x) = y.
\end{array}
$$
Ideally, we want to learn the function by minimizing the risk of misclassification, where the misclassification risk is the expected value of the loss function over the joint density of the data. The joint density of the data, $\mathbb{P}(x,y)$ is defined over $\mathcal{X}\times\mathcal{Y}$. 

\begin{definition}[Misclassification Risk]
The statistical misclassification risk for the classifier $f$ is defined as 
$$
R(f)=\mathbb{E}_{\mathbb{P}}[l(f(x),y)]=\int l(f(x),y)d\mathbb{P}(x,y).
$$
\end{definition}

The joint density of the data for a segment , $\mathbb{P}(x,y)$, is however not known in practice. What we have access to is the training data. Therefore in practice the loss of the classifier is evaluated over the training data. This yields the empirical misclassification risk.

\begin{definition}[Empirical Misclassification Risk]
The empirical misclassification risk for the classifier $f$  is defined as
$$
\hat{R}_n(f)=\mathbb{E}_{\hat{\mathbb{P}}}[l(f(x),y)]=\frac{1}{n}\sum_{i=1}^nl(f(x_i),y_i)
$$

\end{definition}

\begin{lem}[Concentration Inequality for Segment Misclassification Error.] \label{lemma:sample_size}
For a given $n \geq \frac{1}{2\epsilon^2}\log(\frac{2}{\delta})$ and $0<\delta<1$, 
$$
\mathbb{P}(|\hat{R}_n(f)-R(f)|<\epsilon)>1-\delta.
$$
\end{lem}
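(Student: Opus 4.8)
The plan is to recognize the statement as a direct application of Hoeffding's inequality to the bounded, independent loss terms that make up the empirical risk. First I would fix the classifier $f$ --- crucially, $f$ must not depend on the sample $(x_1,y_1),\dots,(x_n,y_n)$ used to form $\hat{R}_n(f)$ --- and define the random variables $Z_i=l(f(x_i),y_i)$ for $i=1,\dots,n$. Assuming the segment observations are drawn i.i.d.\ from the joint density $\mathbb{P}(x,y)$, the $Z_i$ are i.i.d., and because the $0$--$1$ loss takes values in $\{0,1\}$, each $Z_i$ is bounded in $[0,1]$. By the definition of the misclassification risk we have $\mathbb{E}[Z_i]=R(f)$, so $\hat{R}_n(f)=\frac{1}{n}\sum_{i=1}^n Z_i$ is the sample mean of i.i.d.\ variables whose common mean is exactly $R(f)$.

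The second step is to invoke Hoeffding's inequality for the sample mean of independent variables bounded in $[0,1]$, which gives for every $\epsilon>0$
$$
\mathbb{P}\big(|\hat{R}_n(f)-R(f)|\geq \epsilon\big)\leq 2\exp(-2n\epsilon^2).
$$
Passing to complements yields $\mathbb{P}(|\hat{R}_n(f)-R(f)|<\epsilon)\geq 1-2\exp(-2n\epsilon^2)$.

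The final step is the elementary algebra that converts this tail bound into the stated sample-size requirement. I would impose $2\exp(-2n\epsilon^2)\leq \delta$, take logarithms, and rearrange to obtain $n\geq \frac{1}{2\epsilon^2}\log(\frac{2}{\delta})$, which is precisely the hypothesis of the lemma. Under this condition the lower bound $1-2\exp(-2n\epsilon^2)$ is at least $1-\delta$, establishing the claimed inequality (with the usual harmless boundary comment that strictness of $>1-\delta$ holds once $n$ strictly exceeds the threshold).

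The routine calculations here are trivial; the only genuine subtlety, and the point I would flag explicitly, is the requirement that $f$ be independent of the sample. The bound above is valid for a single, pre-specified classifier. In the algorithm the leaf classifier is selected by cross-validated training error from a pool of candidates, so in practice $f$ depends on the data, and the identity $\mathbb{E}[Z_i]=R(f)$ for the \emph{chosen} $f$ no longer holds unconditionally. A fully rigorous guarantee for that setting would require a uniform bound over the candidate pool $\mathcal{F}$ --- for instance a union bound replacing $\log(\frac{2}{\delta})$ by $\log(\frac{2|\mathcal{F}|}{\delta})$, or a VC-dimension argument for richer classes. I would therefore prove and state the lemma as a per-segment guarantee for a fixed classifier, and, if desired, record the union-bound extension as a remark to connect it with the model-selection step actually used by Algorithm~\ref{algo:dt_tree_seg_reg}.
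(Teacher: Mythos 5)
Your proof is correct and follows essentially the same route as the paper: both apply Hoeffding's inequality to the bounded variables $Z_i = l(f(x_i),y_i)$ with common mean $R(f)$ and convert the tail bound $2\exp(-2n\epsilon^2)\leq\delta$ into the stated sample-size condition. Your added caveat---that the bound holds only for a classifier $f$ fixed independently of the sample, whereas Algorithm~\ref{algo:dt_tree_seg_reg} selects $f$ by cross-validation, so a union bound over the candidate pool would be needed for full rigor---is a legitimate subtlety that the paper's proof does not address.
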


\begin{proof}

Hoeffding's inequality \cite{hoeffding_63} states that  
\begin{quote}
If $Z_1, Z_2,\hdots,Z_n$ are independent with $\mathbb{P}(a\leq Z_i \leq b)=1$ and have a common mean $\mu$ then
$$
\mathbb{P}(|\bar{Z}-\mu|>\epsilon)<\delta
$$
where $\bar{Z}=\frac{1}{n}\sum_{i=1}^nZ_i$ and $\delta=2\exp\{-\frac{2n\epsilon^2}{(b-a)^2}\}$.
\end{quote}
In our case, we define $Z_i=l(f(x_i),y_i)$, which is bounded with probability one ($a = 0\ and\ b = 1$) and have common mean $R(f)$. When we set $n=\frac{1}{2\epsilon^2}\log\big(\frac{2}{\delta}\big)$, we have
\begin{eqnarray*}
\mathbb{P}[|\hat{R}_n(f)-R(f)|\geq\epsilon]&\leq& 2\exp\{-2n\epsilon^2\},\\
&=&2 \exp\{-2\frac{1}{2\epsilon^2}\log(\frac{2}{\delta})\epsilon^2\},\\
&=&2e^{-\log\big(\frac{2}{\delta}\big)},\\
&=&\delta.
\end{eqnarray*}
Hence the result.
\end{proof}
Lemma \ref{lemma:sample_size} provides a method to determine the sample size needed to keep the difference between the misclassification risk and the empirical misclassification risk to a small value, $\epsilon$,  with high probability $(1-\delta)$.

\subsection{Bayes Error Rate}
A review of the results of applying Algorithm \ref{algo:dt_tree_seg_reg} to various datasets used in the study reveals that the divide and conquer approach has some very useful implications in analyzing large datasets. It is evident that most problems are characterized by many regions where we achieve good success in predicting the class label and a few regions where predicting the class label is challenging. This characteristic is very useful because it points out the difficult regions of the dataset in terms of the classification task. Some questions that are of interest in the problematic regions of the dataset are the following: What is the best possible accuracy in these problematic regions? Are the features useful for the classification task in the problematic regions? The census income dataset is an example of where such questions are very relevant. The Bayes Error is a very useful theoretical idea to answer these questions. (see \cite{devroye96}[Chapter 2]). The optimal classifier $f^*(x)$ associated with a classification task is the Bayes classifier. For a binary classification problem, as is the case with the census income dataset, the Bayes classifier assigns class labels using the following rule:
$$ f^*(x)  =  \begin{cases*} 1\quad if\ \mathbb{P}\left[y = 1| x\right] > \frac{1}{2}\\ 0,\ otherwise. \end{cases*} $$

If we can estimate $\mathbb{P}\left[y = 1| x\right]$, we can estimate the performance of the Bayes classifier (see \cite{tumer2003bayes}). Density estimation is a computationally expensive task (see \cite{friedman2001elements}[Chapter 6, section 6.9]). Estimating the density for the entire dataset is computationally intractable. However, we are interested in evaluating this only for segments where we have poor classification accuracy. Since segments sizes are small and we want to perform this for a few segments only, this is computationally tractable. If it turns out that the Bayes classifier performs also poorly at these segments, then we know that the features are not useful for that segment and we need better features to improve classification accuracy. Applying these ideas to evaluate poorly performing segments is an area of future work. The intent here is to point out the localizing the problematic areas enables us to apply theoretical tools like the Bayes error rate to a small subset of our data. This makes such analysis more tractable than applying this to the entire dataset.

\section{Conclusion}\label{sec:conclusion}
\noindent We presented an an algorithm to perform classification tasks on large datasets. The algorithm uses a divide and conquer strategy to scale classification tasks. A decision tree is used to segment the dataset. By construction, many of the decision tree leaves are relatively homogeneous in terms of class distribution. Suitable classifiers can be used on the non-homogeneous leaves to determine class labels for the leaf instances. We demonstrated the effectiveness of this algorithm on large datasets. The algorithm achieves one of the following outcomes:
\begin{enumerate}
\item We achieve good classification accuracy. The levels of accuracy obtained is higher than what is achievable with a simple decision tree and can match the accuracy obtained using ensemble techniques like random forests or gradient boosted trees. Further the model produced by the proposed algorithm is easy to interpret and can yield insights related to the learning task. In contrast, though ensemble methods can be accurate, the models they produce are not very interpretable.
\item We are able to identify problematic or noisy regions of the dataset. This situation is characterized by decision tree nodes where all classifiers perform poorly. Typically this is a small portion of the dataset. This algorithm can be used to identify such regions of the dataset. These segments can then be isolated for further analysis. After removing these problematic segments, we are able to achieve high classification accuracy.
\end{enumerate} 
In summary, the proposed algorithm can produce models that are accurate and interpretable. This is highly desirable.

\bibliographystyle{ACM-Reference-Format}
\bibliography{ldcct} 

\end{document}